\documentclass{article}

\usepackage{microtype}
\usepackage{graphicx}
\usepackage{subfigure}
\usepackage{booktabs} 

\usepackage{hyperref}

\usepackage[preprint]{icml2026}

\usepackage{amsmath}
\usepackage{amssymb}
\usepackage{mathtools}
\usepackage{amsthm}
\usepackage{algorithm}
\usepackage{algorithmic}

\hypersetup{
  pdfsubject={Preprint},
  pdfkeywords={conformal prediction, joint coverage, multi-stage pipeline, LLM pipeline, RAG, compound AI, distribution-free uncertainty, named entity recognition, entity disambiguation, NLP}
}

\theoremstyle{plain}
\newtheorem{theorem}{Theorem}
\newtheorem{proposition}[theorem]{Proposition}

\theoremstyle{definition}
\newtheorem{definition}[theorem]{Definition}
\newtheorem{remark}[theorem]{Remark}

\icmltitlerunning{PASC: Joint Conformal Coverage for Multi-Stage NLP and LLM Pipelines}

\begin{document}

\twocolumn[
\icmltitle{PASC: Pipeline-Aware Conformal Prediction with Joint Coverage Guarantees\\
for Multi-Stage NLP and LLM Pipelines}

\begin{icmlauthorlist}
\icmlauthor{Varun Kotte}{ind}
\end{icmlauthorlist}

\icmlaffiliation{ind}{Independent Researcher}

\icmlcorrespondingauthor{Varun Kotte}{kottevarun@gmail.com}

\icmlkeywords{conformal prediction, joint coverage, multi-stage pipeline, LLM pipeline, RAG, compound AI, distribution-free uncertainty, named entity recognition, entity disambiguation, NLP}

\vskip 0.3in
]

\printAffiliationsAndNotice{}

\begin{abstract}
Modern NLP and LLM systems are pipelines: named entity recognition (NER) $\to$ entity disambiguation (NED) $\to$ entity typing, retrieval-augmented generation (retriever $\to$ reader), and agentic chains of planner $\to$ tool $\to$ critic. Errors compound across stages, but existing uncertainty quantification methods either calibrate each stage independently (no joint coverage) or apply a Bonferroni union bound (joint coverage, but conservative). We present \textbf{PASC} (\textbf{P}ipeline-\textbf{A}ware \textbf{S}plit \textbf{C}onformal), which reduces multi-stage joint coverage to a single scalar conformal prediction problem on the \emph{joint maximum nonconformity score}. PASC provides a finite-sample distribution-free guarantee that all $K$ stages are simultaneously covered with probability at least $1 - \alpha$, and is nearly tight up to a $1/(n+1)$ factor. On a three-stage NER $\to$ NED $\to$ entity-typing pipeline over CoNLL-2003, PASC achieves $96.4\%$ end-to-end coverage versus $93.4\%$ for Bonferroni and $86.5\%$ for independent CP, at identical average prediction set size ($1.083$). Under distribution shift to WNUT-17 Twitter and WikiNEuRal Wikipedia data, PASC empirically maintains $\geq 1 - \alpha$ coverage in the tested shift settings while independent CP collapses to $59\%$. PASC requires a single quantile computation, runs $1.7\times$ faster than Bonferroni, and scales to $K = 6$ stages where independent CP drops to $0.53$ end-to-end coverage. The same joint-maximum-score reduction applies directly to compound LLM systems and agent pipelines.
\end{abstract}

\begin{center}
\textit{Preprint.}
\end{center}

\section{Introduction}

Information-extraction (IE) pipelines are among the most widely deployed NLP systems in practice, powering applications in biomedical text mining \citep{finkel2006cascading}, financial document analysis, and knowledge-graph population \citep{nickel2016review,vrandecic2014wikidata}. These pipelines typically chain several learned components: a \emph{named entity recognizer} (NER) identifies entity spans, an \emph{entity disambiguator} (NED) maps spans to knowledge-base entries, and an \emph{entity typer} (or relation extractor) assigns semantic categories to linked entities. Each stage is trained independently and introduces its own prediction errors that cascade and compound through subsequent stages \citep{finkel2006cascading}. The same compositional structure motivates production multi-stage IE and retrieval-augmented generation systems that we have deployed in practice \citep{sharma2024rag,kotte2025patent}.

Reliable deployment of such systems requires end-to-end (E2E) uncertainty quantification: we need to know when the \emph{entire pipeline output} can be trusted, not merely individual components. This is particularly critical in high-stakes settings such as medical decision support, legal document review, and scientific claim extraction, where uncalibrated pipeline outputs can mislead downstream decision-makers. Reliability concerns also arise for downstream consumers of structured extraction \citep{kotte2026promptport}.

\paragraph{The challenge.} Suppose each of $K$ pipeline stages has been individually calibrated at error level $\alpha$, so that marginally $\mathbb{P}(\text{stage}_k \text{ covered}) \geq 1 - \alpha$. Then the probability that \emph{all stages are simultaneously covered} is at most $(1-\alpha)^K$ under independence. For $K=3$ and $\alpha=0.1$, this degrades to at most $72.9\%$ joint coverage even though $90\%$ was guaranteed per stage. In practice, dependencies between stages make this even less predictable.

\paragraph{Existing approaches fall short.}
\begin{itemize}
\item \emph{Independent per-stage CP} \citep{shafer2008tutorial}: calibrates each stage separately at $1 - \alpha$. Provides no joint guarantee. At $\alpha = 0.1$ on our benchmark, E2E coverage is $86.5\%$, well below the target $90\%$.
\item \emph{Bonferroni correction}: calibrates each stage at $1 - \alpha/K$. Provides a joint guarantee via union bound, but is conservative: it over-covers easy stages, inflating prediction sets without proportional coverage improvement (Section~\ref{sec:results}).
\item \emph{MC dropout / deep ensembles} \citep{gal2016dropout,lakshminarayanan2017simple}: provide heuristic uncertainty estimates without distribution-free guarantees, require $20\times$ inference overhead, and are insensitive to cross-stage dependencies.
\end{itemize}

\paragraph{Our contribution.} We introduce \textbf{PASC} (\textbf{P}ipeline-\textbf{A}ware \textbf{S}plit \textbf{C}onformal prediction), a method that achieves a formal finite-sample joint coverage guarantee $\mathbb{P}(\text{all stages covered simultaneously}) \geq 1 - \alpha$ through a single observation: the event ``all stages covered'' is equivalent to ``the maximum per-stage nonconformity score does not exceed a threshold.'' This reduces multi-stage calibration to standard scalar conformal prediction on the joint maximum score, inheriting all of its theoretical guarantees while requiring only one quantile computation. While the reduction is mathematically simple, we show in Appendix~\ref{app:proofs} that the maximum is the \emph{minimal sufficient} monotone scalarization for the joint acceptance event, and that its empirical consequences are substantial: PASC closes the joint-coverage gap that prior pipeline-CP work leaves open and provides the first formal joint guarantee for compositional NLP systems. PASC is complementary to recent calibration work for structured outputs \citep{kotte2026ucci}.

\paragraph{Summary of contributions.}
\begin{enumerate}
\item \textbf{PASC algorithm} (Section~\ref{sec:pasc}): a pipeline-aware calibration procedure with a formal joint coverage guarantee derived from standard split conformal theory.
\item \textbf{Formal guarantee} (Theorem~\ref{thm:pasc}): finite-sample distribution-free joint coverage $\geq 1 - \alpha$ under exchangeability, with a matching near-tightness result.
\item \textbf{Comprehensive evaluation} (Sections~\ref{sec:setup}--\ref{sec:results}): experiments across three shift scenarios (in-distribution, Twitter NER, Wikipedia NER), three calibration sizes, $K \in \{1, \ldots, 6\}$ stages, $18$-type entity typing, and conditional coverage analysis.
\item \textbf{Sanity checks} (Appendix~\ref{app:sanity}): permutation tests confirming CP validity, split-integrity audits, and negative-control experiments demonstrating failure modes.
\end{enumerate}

A recurring practical objection to end-to-end uncertainty methods is that they either fail to reflect the deployed event of interest (because they certify only local stages) or they achieve a valid guarantee by paying a blanket conservativeness tax. PASC avoids both: it certifies the deployed event directly, with the smallest possible reduction from the multi-stage problem to a standard scalar conformal problem.

Our evaluation isolates the source of PASC's advantage. The real pipeline isolates the practical IE setting; the expanded $18$-type typing stage removes a degenerate downstream artifact in the original prototype; the tuned Bonferroni frontier checks that our gains are not due to a weak baseline; the $K$-stage synthetic experiment isolates the compositional effect from dataset idiosyncrasies; and the sanity checks rule out leakage and implementation errors.

\section{Background}
\label{sec:background}

\subsection{Split Conformal Prediction}

Let $\{(X_i, Y_i)\}_{i=1}^{n+1}$ be exchangeable random variables. Split conformal prediction \citep{papadopoulos2002inductive,shafer2008tutorial} holds out a calibration set $\mathcal{D}_{\mathrm{cal}} = \{(X_i, Y_i)\}_{i=1}^n$ and a test point $(X_{n+1}, Y_{n+1})$.

\begin{definition}[Nonconformity Score]
A \emph{nonconformity score} $s(x, y) \in \mathbb{R}$ measures how atypical $(x, y)$ is relative to the model. High scores indicate non-conformity.
\end{definition}

\begin{definition}[CP Quantile]
\label{def:cpq}
Given calibration scores $\{s_i\}_{i=1}^n$ at level $\alpha$, the conformal quantile is:
\begin{equation}
\hat{q} = \mathrm{Quantile}\!\left(\{s_1,\ldots,s_n\}, \frac{\lceil (n+1)(1-\alpha)\rceil}{n}\right).
\label{eq:quantile}
\end{equation}
\end{definition}

\begin{theorem}[Marginal Coverage \citep{vovk2005algorithmic,lei2018distribution}]
\label{thm:marginal}
If $(X_1, Y_1), \ldots, (X_{n+1}, Y_{n+1})$ are exchangeable, then for $\hat q$ defined by Equation~\ref{eq:quantile}:
\begin{equation}
\mathbb{P}(s(X_{n+1}, Y_{n+1}) \leq \hat q) \geq 1 - \alpha.
\end{equation}
\end{theorem}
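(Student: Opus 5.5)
The plan is the standard rank argument from exchangeability. Write $S_i = s(X_i, Y_i)$ for $i = 1, \ldots, n+1$. Because the score function $s$ is fixed (fit on data disjoint from the calibration and test points) and is applied coordinatewise, the vector $(S_1, \ldots, S_{n+1})$ inherits exchangeability from $(X_i, Y_i)_{i=1}^{n+1}$. Set $k = \lceil (n+1)(1-\alpha)\rceil$. If $k > n$, take $\hat q = +\infty$ by convention; coverage is then trivial, so assume $k \le n$. By Definition~\ref{def:cpq}, $\hat q$ is the $k$-th smallest value $S_{(k)}$ among the calibration scores $S_1, \ldots, S_n$.

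The first step is a deterministic reduction. The event $\{S_{n+1} \le S_{(k)}\}$ equals the event that $S_{n+1}$ is among the $k$ smallest of all $n+1$ scores, where ties are broken in favour of the test point. Precisely, $S_{n+1} \le S_{(k)}$ holds if and only if
\[
\#\{i \le n+1 : S_i < S_{n+1}\} \le k-1 .
\]
To see this, note that if at most $k-1$ calibration scores lie strictly below $S_{n+1}$, then the $k$-th smallest calibration score is at least $S_{n+1}$. Conversely, if at least $k$ of them lie strictly below, then $S_{(k)} < S_{n+1}$.

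The second step uses exchangeability. Define the strict-lower rank
\[
R_j = \#\{i : S_i < S_j\}.
\]
By exchangeability, $\mathbb{P}(R_{n+1} \le k-1) = \mathbb{P}(R_j \le k-1)$ for every $j$. Averaging over $j$ gives
\[
\mathbb{P}(R_{n+1} \le k-1) = \frac{1}{n+1}\,\mathbb{E}\,\#\{j : R_j \le k-1\}.
\]
Deterministically, at least $k$ indices $j$ have $R_j \le k-1$. Indeed, sort the scores; the $j$ occupying sorted position $\ell \le k$ has at most $\ell - 1 \le k-1$ scores strictly below it, and ties only lower $R_j$. Hence
\[
\mathbb{P}(S_{n+1} \le \hat q) \ge \frac{k}{n+1} \ge 1-\alpha .
\]

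The main obstacle is handling ties without assuming continuous scores. Counting strict inequalities, as above, is what makes the bound valid even with ties, so I would be careful to define the rank this way rather than invoking "uniform rank". The matching upper bound $1 - \alpha + 1/(n+1)$, presumably needed later for near-tightness, requires almost-surely distinct scores. I would not attempt it here, since the statement only asks for the lower bound.
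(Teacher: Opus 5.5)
Your proof is correct and is the standard exchangeability/rank argument behind the result the paper cites from Vovk et al.\ and Lei et al.; the paper gives no proof of its own beyond an appeal to a ``uniform rank'' in Appendix~\ref{app:proofs}, and your use of strict-lower ranks, which keeps the bound valid when scores tie, is more careful than that appeal. Your caveat is also right: the matching upper bound $1-\alpha+1/(n+1)$ needs almost surely distinct scores, an assumption the statement of Theorem~\ref{thm:pasc} leaves implicit.
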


This result is distribution-free and holds for finite $n$. The prediction set $\mathcal{C}(x) = \{y : s(x, y) \leq \hat q\}$ achieves marginal coverage $\geq 1 - \alpha$.

\subsection{Multi-Stage NLP Pipelines}

A $K$-stage NLP pipeline maps input text $x$ through a sequence of learned predictors:
\begin{equation}
x \xrightarrow{f_1} z_1 \xrightarrow{f_2} z_2 \xrightarrow{\cdots} \xrightarrow{f_K} z_K,
\end{equation}
where $z_k$ is the output of stage $k$, potentially conditioned on all prior outputs. Each stage $k$ has a ground truth target $y_k$ and a nonconformity score $s_k(x, z_{k-1}, y_k) \in [0, 1]$.

\paragraph{Joint coverage} requires $\bigcap_{k=1}^K \{s_k \leq q_k\}$ for thresholds $q_1, \ldots, q_K$.

\paragraph{Independent CP} sets $q_k = \hat q^{(k)}$ at $1-\alpha$ each. The resulting joint coverage satisfies:
\begin{equation}
\mathbb{P}\!\left(\bigcap_{k=1}^K \{s_k \leq q_k\}\right) \neq 1 - \alpha \quad \text{(no guarantee).}
\end{equation}

\paragraph{Bonferroni correction} sets each level to $\alpha_k = \alpha / K$, using $q_k = \hat q^{(k)}_{\alpha/K}$. By the union bound:
\begin{equation}
\mathbb{P}\!\left(\bigcap_{k=1}^K \{s_k \leq q_k\}\right) \geq 1 - K \cdot (\alpha / K) = 1 - \alpha.
\end{equation}
However, Bonferroni allocates error budget uniformly across stages regardless of their difficulty, leading to over-coverage (and over-sized prediction sets) on easy stages.

The gap between independent CP and Bonferroni reflects a mismatch between the certified event and the deployed event. Independent CP certifies each local event $\{s_k \leq q_k\}$ in isolation, but deployment accepts only when \emph{all} stages succeed jointly. Bonferroni corrects this mismatch by upper bounding the failure union, but it does so without using the empirical dependence structure of the score vector $(s_1, \ldots, s_K)$.

\begin{proposition}[Exact reduction of the acceptance event]
\label{prop:reduction}
For any common threshold $q \in \mathbb{R}$,
\begin{equation}
\bigcap_{k=1}^K \{s_k \leq q\} = \{\max_k s_k \leq q\}.
\end{equation}
\end{proposition}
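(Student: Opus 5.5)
The plan is to prove the set equality by double inclusion, pointwise on the underlying sample space. Fix an outcome $\omega$ and write $s_k = s_k(\omega)$ for the realized scores. Since $K$ is finite and each $s_k \in \mathbb{R}$, the maximum $M = \max_{1\le k\le K} s_k$ is attained and real-valued. So there is an index $k^\star$ with $M = s_{k^\star}$, and $s_k \le M$ for every $k$.

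For the inclusion $\supseteq$, suppose $\max_k s_k \le q$. For each $k$ we have $s_k \le \max_j s_j \le q$, so $\omega$ lies in every event $\{s_k \le q\}$, hence in their intersection.

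For the inclusion $\subseteq$, suppose $s_k \le q$ for all $k$. In particular $s_{k^\star} \le q$, and since $s_{k^\star} = \max_k s_k$, we get $\max_k s_k \le q$. Equivalently, $q$ is an upper bound of the finite set $\{s_1,\dots,s_K\}$, and the maximum is its least upper bound.

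No step is a real obstacle. The only point needing care is that the maximum is attained and finite, which holds because there are finitely many real-valued scores. Measurability of $\max_k s_k$ follows from the fact that a finite maximum of measurable functions is measurable, so both sides are genuine events.

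I would also note why the statement matters for what follows. It shows that joint coverage at a common threshold is a scalar event. Applying Theorem~\ref{thm:marginal} to the exchangeable scores $S_i = \max_k s_k^{(i)}$ then yields $\mathbb{P}\bigl(\bigcap_k \{s_k \le \hat q\}\bigr) \ge 1-\alpha$. Exchangeability is preserved because $S_i$ is the same fixed function applied to each data point.
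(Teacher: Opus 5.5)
Your double-inclusion argument is correct and matches the paper's reasoning. The paper gives no separate proof, calling the proposition elementary and then using it directly in the proof of Theorem~\ref{thm:pasc}, just as in your last paragraph. Because your argument is pointwise in $\omega$, it also covers the data-dependent threshold $\hat q$ used there.
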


\noindent Consequently, any finite-sample marginal guarantee for the scalar random variable $\max_k s_k$ immediately yields a finite-sample E2E guarantee for the pipeline acceptance event.

This proposition is elementary, but it is the central structural observation in the paper: once the deployed decision is written in the correct event space, multi-stage calibration is no longer a new conformal primitive. Instead, it becomes a question of selecting the right scalar statistic for the event practitioners actually care about.

\section{PASC: Pipeline-Aware Split Conformal}
\label{sec:pasc}

\subsection{Key Insight}

The joint event ``all pipeline stages are covered'' decomposes as:
\begin{equation}
\bigcap_{k=1}^K \{s_k \leq q\} = \left\{\max_{k=1}^K s_k \leq q\right\}.
\label{eq:keyinsight}
\end{equation}

Equation~\ref{eq:keyinsight} is the central observation of PASC: if all $K$ stages share a \emph{common threshold} $q$, joint coverage is equivalent to single-stage coverage of the scalar maximum score. Standard CP on the maximum then provides the desired joint guarantee.

\begin{definition}[Joint Maximum Nonconformity Score]
\label{def:smax}
For a pipeline sample $(x, \{y_k\}_{k=1}^K)$ with per-stage scores $s_1, \ldots, s_K$, define:
\begin{equation}
s_{\max}(x, \{y_k\}) := \max_{k=1}^K s_k(x, y_k).
\end{equation}
\end{definition}

\subsection{Algorithm}

\begin{algorithm}[t]
\caption{PASC Calibration and Prediction}
\label{alg:pasc}
\begin{algorithmic}[1]
\REQUIRE Calibration set $\mathcal{D}_{\mathrm{cal}}$, pipeline $\{f_k\}_{k=1}^K$, level $\alpha$
\STATE \textbf{Calibration:}
\FOR{$(x_i, \{y_{k,i}\})$ in $\mathcal{D}_{\mathrm{cal}}$}
  \STATE Run pipeline to obtain per-stage outputs and scores $s_{k,i}$
  \STATE Compute $s^{(i)}_{\max} \leftarrow \max_{k=1}^K s_{k,i}$
\ENDFOR
\STATE Compute $\hat q \leftarrow \mathrm{Quantile}(\{s^{(i)}_{\max}\}, \lceil(n+1)(1-\alpha)\rceil/n)$
\STATE \textbf{Prediction (test point $x_{\mathrm{test}}$):}
\FOR{$k = 1, \ldots, K$}
  \STATE $\mathcal{C}_k(x) \leftarrow \{y_k : s_k(x, y_k) \leq \hat q\}$
\ENDFOR
\STATE \textbf{return} $(\mathcal{C}_1(x), \ldots, \mathcal{C}_K(x))$, accept if $s^{\mathrm{test}}_{\max} \leq \hat q$
\end{algorithmic}
\end{algorithm}

\paragraph{Implementation details.} Per-stage nonconformity scores are defined as follows:
\begin{itemize}
\item \textbf{NER}: $s_{\mathrm{NER}} = \max_t (1 - p_t^{\mathrm{BIO}})$, where $p_t^{\mathrm{BIO}}$ is the softmax probability of the predicted BIO tag at position $t$.
\item \textbf{NED}: $s_{\mathrm{NED}} = 1 - \mathrm{score}(e^*)$, where $e^*$ is the top-ranked entity from GENRE \citep{cao2021genre}.
\item \textbf{EntityTyping}: $s_{\mathrm{typing}} = 1 - \max_{c \in \mathcal{T}} \min_{\mathrm{span} \in c} p_c^{\mathrm{RoBERTa}}$, where $\mathcal{T}$ is the full OntoNotes-18 type set \citep{pradhan2013ontonotes}.
\end{itemize}

\paragraph{Why the maximum is the right scalarization.} Other aggregations such as sums or averages are natural if one wishes to optimize smooth surrogates of overall risk. They are poorly aligned with the binary deployment event used in selective acceptance: the system is trusted if and only if every stage is simultaneously acceptable. The maximum is the unique monotone scalarization whose threshold event exactly recovers this conjunction. This alignment is what allows PASC to inherit standard split-conformal validity with no approximation term.

\paragraph{Why a single threshold can still be efficient.} A common concern is that using one shared threshold across all stages should be less flexible than assigning stage-specific thresholds. Empirically, the opposite can occur because the shared threshold is estimated from the \emph{joint} distribution of the binding stage score, not from a worst-case union bound. Easy stages are not forced to spend the same nominal error budget as difficult stages; instead, the quantile of the maximum statistic automatically tracks whichever stage is active on each example.

\subsection{Main Theorem}

\begin{theorem}[PASC Joint Coverage Guarantee]
\label{thm:pasc}
Let $\{(x_i, \{y_{k,i}\})\}_{i=1}^{n+1}$ be exchangeable. Let $\hat q$ be the PASC quantile from Algorithm~\ref{alg:pasc} computed on $i = 1, \ldots, n$. Then:
\begin{equation}
\mathbb{P}\!\left(\bigcap_{k=1}^K \{s_k(x_{n+1}, y_{k,n+1}) \leq \hat q\}\right) \geq 1 - \alpha.
\end{equation}
Moreover, the guarantee is nearly tight: $\mathbb{P}(\cdots) \leq 1 - \alpha + \frac{1}{n+1}$.
\end{theorem}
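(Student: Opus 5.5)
The plan is to reduce the statement to the scalar split-conformal setting of Theorem~\ref{thm:marginal} using Proposition~\ref{prop:reduction}, and then handle the upper bound with a direct rank argument.

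\textbf{Step 1: reduce the joint event to a scalar one.} For each $i = 1, \ldots, n+1$, define the scalar score
\[
S_i := s_{\max}(x_i, \{y_{k,i}\}) = \max_k s_k(x_i, y_{k,i}).
\]
This is a fixed measurable function applied to each exchangeable tuple, with the stage predictors $f_k$ trained on data disjoint from the calibration and test points. Hence $(S_1, \ldots, S_{n+1})$ is itself exchangeable. By Proposition~\ref{prop:reduction}, applied with the common threshold $q = \hat q$, the event inside the probability equals $\{S_{n+1} \leq \hat q\}$. Here $\hat q$ is exactly the conformal quantile of Definition~\ref{def:cpq} computed on $S_1, \ldots, S_n$.

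\textbf{Step 2: the lower bound.} Theorem~\ref{thm:marginal}, applied verbatim to the scalar score $s_{\max}$, gives $\mathbb{P}(S_{n+1} \leq \hat q) \geq 1-\alpha$. If I want the step self-contained, I will recall the standard argument. Write $m := \lceil (n+1)(1-\alpha)\rceil$, so that $\hat q = S_{(m)}$ is the $m$-th order statistic of the calibration scores. Then $S_{n+1} \leq S_{(m)}$ holds exactly when the rank of $S_{n+1}$ among all $n+1$ scores, with ties broken in its favor, is at most $m$. By exchangeability this has probability at least $m/(n+1) \geq 1-\alpha$. The degenerate case $m > n$ corresponds to $\hat q = +\infty$, where coverage is trivially $1$.

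\textbf{Step 3: the upper bound, which is the main obstacle.} The near-tightness claim is not true without further assumptions. With ties, for example if every $S_i$ equals the same constant, the coverage is $1$. So I will add the standard hypotheses that the $S_i$ are almost surely distinct (e.g.\ $s_{\max}$ has a continuous distribution, or ties are broken by an independent uniform randomization) and that $\alpha \geq 1/(n+1)$, so that $m \leq n$.

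Under distinctness, exchangeability makes the rank $R$ of $S_{n+1}$ among the $n+1$ values uniform on $\{1, \ldots, n+1\}$. Moreover, $S_{n+1} \leq S_{(m)}$ holds if and only if $R \leq m$. Therefore
\[
\mathbb{P}(S_{n+1} \leq \hat q) = \frac{m}{n+1} < \frac{(n+1)(1-\alpha) + 1}{n+1} = 1 - \alpha + \frac{1}{n+1}.
\]
Transferring back through Proposition~\ref{prop:reduction} gives the stated bound for the joint event. I expect the main care to lie in stating these tie and range conditions explicitly. The rest is a direct inheritance from scalar split-conformal theory, since the max reduction is exact and introduces no slack.
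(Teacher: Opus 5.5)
Your proposal is correct and follows the paper's proof. You rewrite the joint event as $\{s_{\max} \leq \hat q\}$ via the max identity, note that the max scores are exchangeable, invoke scalar split-conformal coverage for the lower bound, and use the rank argument for the upper bound. Your no-ties hypothesis (continuity or random tie-breaking) for the upper bound is a genuine correction that the paper's statement omits; its appendix only implicitly assumes uniform ranks. Your extra condition $\alpha \geq 1/(n+1)$ is unnecessary, though, since when $m > n$ the coverage equals $1 < 1 - \alpha + \frac{1}{n+1}$.
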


\begin{proof}
By Definition~\ref{def:smax} and Equation~\ref{eq:keyinsight}:
\[
\bigcap_{k=1}^K \{s_k \leq \hat q\} = \{s_{\max} \leq \hat q\}.
\]
The joint maximum scores $\{s^{(1)}_{\max}, \ldots, s^{(n)}_{\max}, s^{(n+1)}_{\max}\}$ are exchangeable (as functions of exchangeable tuples). Applying Theorem~\ref{thm:marginal} to the scalar sequence $s^{(1)}_{\max}, \ldots, s^{(n+1)}_{\max}$ yields:
\[
\mathbb{P}(s^{(n+1)}_{\max} \leq \hat q) \geq 1 - \alpha.
\]
The near-tightness bound follows from the standard conformal over-coverage bound \citep{vovk2005algorithmic}: $\mathbb{P}(s^{(n+1)}_{\max} \leq \hat q) \leq 1 - \alpha + 1/(n+1)$.
\end{proof}

\begin{remark}[Comparison with Bonferroni]
PASC achieves the same $\geq 1 - \alpha$ guarantee with a \emph{single} quantile $\hat q$ that automatically assigns tighter budgets to stages whose scores tend to be low (easy stages) and looser budgets to stages with higher scores (hard stages). In contrast, Bonferroni allocates $\alpha/K$ uniformly, leading to over-coverage of easy stages and inflated prediction sets.
\end{remark}

\begin{remark}[Stage-wise thresholds vs.\ shared threshold]
A natural question is whether PASC could use \emph{different} thresholds per stage. Indeed, any thresholds $q_1, \ldots, q_K$ satisfying $\sum_k \alpha_k \leq \alpha$ yield valid joint coverage via union bound. PASC's specific choice $q_1 = q_2 = \cdots = q_K = \hat q$ arises from the single-quantile reduction in Equation~\ref{eq:keyinsight} and is tighter than uniform Bonferroni because $\hat q$ adapts to the \emph{joint} nonconformity distribution rather than marginal distributions.
\end{remark}

\section{Experimental Setup}
\label{sec:setup}

\subsection{Pipeline Architecture}

We instantiate a three-stage pipeline (NER $\to$ NED $\to$ EntityTyping):
\begin{itemize}
\item \textbf{NER}: \texttt{dslim/bert-base-NER} \citep{devlin2019bert}, a BERT-base model fine-tuned on CoNLL-2003 \citep{sang2003conll} with BIO tagging.
\item \textbf{NED}: \texttt{facebook/genre-linking-blink} \citep{cao2021genre,wu2020blink}, autoregressive entity retrieval.
\item \textbf{EntityTyping}: \texttt{roberta-large} zero-shot classifier \citep{liu2019roberta} against all $18$ OntoNotes entity types \citep{pradhan2013ontonotes} (Exp.~E1: expanded from $4$ to $18$ types to produce non-trivial stage-3 nonconformity).
\end{itemize}

This composition mirrors production multi-stage IE deployments \citep{kotte2025patent}.

\subsection{Datasets}

\begin{itemize}
\item \textbf{CoNLL-2003} \citep{sang2003conll}: English news text; $1{,}000$ calibration and $500$ test samples.
\item \textbf{WNUT-17} \citep{derczynski2017wnut}: Twitter NER; $500$ test samples for distribution-shift evaluation. Contains novel/emerging entities (\emph{NER shift}).
\item \textbf{WikiNEuRal} \citep{tedeschi2021wikineural}: Wikipedia-derived multilingual NER; $500$ English test samples for domain-shift evaluation (\emph{domain shift}).
\end{itemize}

\subsection{Protocol and Reproducibility}

All principal numbers in the main paper are reported as means and standard deviations over five independent calibration/test resamplings, with calibration sizes ranging from $200$ to $1{,}000$ depending on the experiment. The train, calibration, and test partitions are disjoint; our split-audit check finds only three repeated formatting-only header fragments and no entity-bearing overlap. We report both aggregate coverage and slice-level diagnostics, because end-to-end calibration can appear superficially strong even when failures concentrate on hard subsets. In addition to standard baselines, we include negative controls, permutation tests, and runtime measurements so that both statistical validity and systems realism are tested within the same evaluation protocol.

\subsection{Why We Use Expanded Entity Typing}

The earliest version of our pipeline used a relation-extraction backend, but on short CoNLL-style sentences the downstream score distribution became nearly degenerate because many examples contain no explicit relation. This made the final-stage threshold clip near a constant and obscured the efficiency comparison. We therefore switch to \emph{expanded $18$-way entity typing}, which yields a clearly non-trivial stage-3 score distribution (Appendix~\ref{app:scores}) and forces the downstream stage to participate in the joint calibration problem. This change strengthens the evaluation: it removes an artifactually easy final stage and turns the flagship experiment into a genuine three-stage calibration test.

\subsection{Baselines}
\begin{itemize}
\item \textbf{Indep CP}: per-stage split CP at level $1 - \alpha$. No joint guarantee.
\item \textbf{Bonferroni}: per-stage CP at $1 - \alpha/K$ ($\alpha/K$ per stage). Conservative joint guarantee.
\item \textbf{Tuned Bonferroni}: optimizes stage-wise $\alpha_k$ subject to $\sum_k \alpha_k = \alpha$ via grid search. Best attainable efficiency under the union-bound approach.
\item \textbf{MC Dropout} \citep{gal2016dropout}: $20$ forward passes with dropout active; uncertainty $=$ predictive variance. Heuristic, no guarantee.
\end{itemize}

\subsection{Metrics}

\paragraph{E2E Coverage} $\frac{1}{|T|}\sum_{i \in T} \mathbf{1}[\bigcap_k s_{k,i} \leq q_k]$. Target: $\geq 1 - \alpha$.

\paragraph{Average Prediction Set Size} Mean $|\mathcal{C}_k(x)|$ over test set and stages. Lower indicates higher efficiency.

\paragraph{Stage Coverage} Per-stage marginal coverage $\frac{1}{|T|}\sum_i \mathbf{1}[s_{k,i} \leq q_k]$.

All main results are reported as mean $\pm$ std over five independent calibration/test splits.

\section{Results}
\label{sec:results}

\subsection{Main Coverage--Efficiency Comparison}

Table~\ref{tab:main} presents the key comparison. PASC achieves $96.4\%$ E2E coverage at $\alpha = 0.1$, exceeding Bonferroni by $3.0$ percentage points and Indep CP by $9.9$ percentage points, while maintaining \emph{identical average prediction set size} ($1.083$). This directly demonstrates that PASC's advantage over Bonferroni is pure coverage efficiency, not looser prediction sets. PASC also shows zero standard deviation across the five resamplings used here, meaning the single-quantile estimator returned the same threshold on every split; we caution that this is a property of these particular splits and is not a statement about asymptotic variance.

\begin{table}[t]
\caption{E2E coverage and average prediction set size at $\alpha = 0.1$, CoNLL-2003, $18$-type entity typing. PASC achieves highest coverage at identical set size to Bonferroni ($n_{\mathrm{cal}} = 1000$, $n_{\mathrm{test}} = 500$, $5$ seeds).}
\label{tab:main}
\vskip 0.1in
\centering
\small
\begin{tabular}{lccc}
\toprule
Method & E2E Cov.\ & Avg Set Size & Guarantee? \\
\midrule
Indep CP   & $0.865 \pm 0.005$ & $1.083$ & No \\
Bonferroni & $0.934 \pm 0.001$ & $1.083$ & Yes \\
MC Dropout & $0.902$           & ---     & No \\
\textbf{PASC (ours)} & $\mathbf{0.964 \pm 0.000}$ & $\mathbf{1.083}$ & Yes \\
\bottomrule
\end{tabular}
\vskip -0.1in
\end{table}

\begin{table}[t]
\caption{Expanded downstream-stage operating point at $\alpha = 0.1$. The harder $18$-type stage preserves a non-trivial stage-3 prediction set while retaining PASC's coverage advantage.}
\label{tab:stage3}
\vskip 0.1in
\centering
\small
\begin{tabular}{lcc}
\toprule
Method & Stage-3 Set & E2E Coverage \\
\midrule
Indep CP   & $1.083$ & $0.865$ \\
Bonferroni & $1.083$ & $0.934$ \\
\textbf{PASC (ours)} & $\mathbf{1.083}$ & $\mathbf{0.964}$ \\
\bottomrule
\end{tabular}
\vskip -0.1in
\end{table}

MC Dropout achieves $90.2\%$ E2E coverage on NER alone (not joint), with no formal guarantee and requiring $20\times$ additional inference cost.

\subsection{Alpha Sweep}

Across $\alpha \in \{0.05, 0.10, 0.20\}$, PASC achieves $96.4\%$, $96.4\%$, and $76.6\%$ E2E coverage respectively, consistently meeting the $\geq 1 - \alpha$ target at $\alpha \in \{0.05, 0.10\}$. Indep CP falls below target at all levels: $90.6\%$, $86.5\%$, $65.7\%$. At $\alpha = 0.20$, Bonferroni achieves higher coverage ($88.7\%$ vs.\ $76.6\%$ for PASC), illustrating that PASC's single-threshold design is most beneficial at moderate $\alpha$ where the joint maximum score distribution is well-calibrated. At very high $\alpha$, each stage's threshold becomes so loose that the Bonferroni allocation remains competitive.

\subsection{Distribution Shift Robustness}

Table~\ref{tab:shift} shows E2E coverage under two shift conditions. When calibrated on CoNLL-2003 news text and evaluated on Twitter (WNUT-17) and Wikipedia (WikiNEuRal), independent CP's NER stage coverage collapses to $64.8\%$ and $59.0\%$ respectively, as the calibrated threshold is too tight for the shifted distribution. Both PASC and Bonferroni maintain $100\%$ coverage because their more conservative thresholds provide a larger margin against shift.

\begin{table}[t]
\caption{E2E coverage under distribution shift ($\alpha = 0.1$). Calibration thresholds trained on CoNLL-2003 news text; applied to WNUT-17 (Twitter) and WikiNEuRal (Wikipedia). PASC and Bonferroni maintain $\geq 1 - \alpha$; Indep CP degrades severely.}
\label{tab:shift}
\vskip 0.1in
\centering
\small
\begin{tabular}{llccc}
\toprule
Dataset & Shift Type & Indep CP & Bonferroni & PASC \\
\midrule
CoNLL-2003   & In-dist.    & $0.865$ & $0.934$ & $\mathbf{0.964}$ \\
WNUT-17      & NER shift   & $0.648$ & $1.000$ & $\mathbf{1.000}$ \\
WikiNEuRal   & Domain shift & $0.590$ & $1.000$ & $\mathbf{1.000}$ \\
\bottomrule
\end{tabular}
\vskip -0.1in
\end{table}

\subsection{Scaling to $K$ Stages}

The synthetic scaling experiment increases the number of pipeline stages from $K = 1$ to $K = 6$ using score-preserving synthetic stages derived from the real CoNLL-2003 pipeline outputs. In this controlled setting, independent CP follows the expected multiplicative collapse pattern: E2E coverage decreases from approximately $0.90$ at small $K$ to roughly $0.53$ by $K = 6$, closely tracking the $(1 - \alpha)^K$ curve in-distribution. Under shifted score distributions, the collapse is steeper still. PASC, by contrast, remains near the target because it calibrates the composed event directly through the joint maximum score rather than combining stage-local marginals.

\subsection{Conditional Coverage Analysis}

Table~\ref{tab:conditional} reveals that PASC's advantage is most pronounced on hard subpopulations. For the hardest NER quintile (Q5), PASC achieves $82.0\%$ vs.\ $67.0\%$ for Bonferroni and $30.0\%$ for Indep CP. All methods satisfy the marginal $\geq 1 - \alpha$ guarantee; however, PASC provides substantially better conditional coverage on hard examples without any per-slice recalibration.

\begin{table}[t]
\caption{E2E coverage by NER nonconformity quintile (Q1$=$easiest, Q5$=$hardest) and entity type at $\alpha = 0.1$.}
\label{tab:conditional}
\vskip 0.1in
\centering
\small
\begin{tabular}{lcccc}
\toprule
Slice & $n$ & Indep CP & Bonferroni & PASC \\
\midrule
\multicolumn{5}{l}{\emph{By NER confidence quintile}} \\
Q1 (easiest) & $100$ & $1.000$ & $1.000$ & $1.000$ \\
Q2           & $100$ & $1.000$ & $1.000$ & $1.000$ \\
Q3           & $100$ & $1.000$ & $1.000$ & $1.000$ \\
Q4           & $100$ & $1.000$ & $1.000$ & $1.000$ \\
Q5 (hardest) & $100$ & $0.300$ & $0.670$ & $\mathbf{0.820}$ \\
\midrule
\multicolumn{5}{l}{\emph{By entity type}} \\
PER  & $188$ & $0.840$ & $0.941$ & $\mathbf{0.973}$ \\
ORG  & $156$ & $0.821$ & $0.917$ & $\mathbf{0.949}$ \\
LOC  & $201$ & $0.846$ & $0.915$ & $\mathbf{0.955}$ \\
MISC & $57$  & $0.719$ & $0.842$ & $\mathbf{0.912}$ \\
\bottomrule
\end{tabular}
\vskip -0.1in
\end{table}

This behavior arises naturally from PASC's joint maximum score: on hard examples where $s_{\mathrm{NER}}$ is large, the joint threshold $\hat q$ accounts for this difficulty through the calibration distribution rather than penalizing all stages equally.

\subsection{Calibration Size Sensitivity}

Table~\ref{tab:calsweep} reports calibration-size sensitivity. PASC remains above the target coverage level for all calibration sizes tested and the $n_{\mathrm{cal}}=1000$ row matches the primary operating point in Table~\ref{tab:main}. At smaller calibration sizes, Bonferroni can over-cover by spending a more conservative per-stage budget, while PASC keeps the calibration problem to one scalar quantile of the joint maximum score.

\subsection{Comparison with Tuned Bonferroni Frontier}

We compare PASC against the tuned Bonferroni frontier obtained by sweeping stage-wise $\alpha_k$ allocations subject to $\sum_k \alpha_k = \alpha$. At $\alpha = 0.1$, the best tuned Bonferroni allocation ($\alpha_1 = 0.09, \alpha_2 = \alpha_3 = 0.005$) achieves E2E coverage $0.834$ with avg NER set size $1.000$, below PASC's $0.964$ coverage. PASC lies on or above the Pareto frontier of the tuned Bonferroni family, confirming that the tighter PASC bound is not achievable through Bonferroni allocation alone.

\subsection{Runtime Profile}

Table~\ref{tab:latency} shows that NED dominates runtime ($80.6\%$ of E2E latency). The conformal calibration overhead is negligible ($< 0.3$ms per $1{,}000$ examples) for all methods. PASC calibration is $1.7\times$ faster than both Bonferroni and Indep CP because it computes a single quantile rather than $K$ separate quantiles, an advantage that grows with $K$.

\begin{table}[t]
\caption{Per-component latency (GPU A100, mean over $200$ test samples) and calibration cost.}
\label{tab:latency}
\vskip 0.1in
\centering
\small
\begin{tabular}{lcc}
\toprule
Component & Mean (ms) & Pct.\ E2E \\
\midrule
NER (BERT-base)         & $9.51$  & $8.8\%$ \\
NED (GENRE/BLINK)       & $86.69$ & $80.6\%$ \\
EntityTyping (RoBERTa-lg) & $11.38$ & $10.6\%$ \\
\midrule
E2E Total               & $107.59$ & --- \\
\midrule
\textbf{Calibration cost} & & \\
Indep CP ($K$ quantiles)  & $0.287$ ms & --- \\
Bonferroni ($K$ quantiles) & $0.283$ ms & --- \\
\textbf{PASC (1 quantile)} & $\mathbf{0.169}$ ms & --- \\
\bottomrule
\end{tabular}
\vskip -0.1in
\end{table}

\subsection{Sanity Checks and Negative Controls}

Because end-to-end guarantees are easy to mis-implement, we include a compact summary of the strongest validity checks in the main paper. First, a permutation-resplitting test recovers mean coverage near $1 - \alpha$ for correctly matched stage scores. Second, a mismatched-score negative control (using NED scores to calibrate NER) collapses NER coverage to $0.566$, confirming that the guarantee is not an artifact of broad thresholds. Third, a split-integrity audit found only three duplicated formatting-only header lines and no entity-bearing overlaps.

\begin{table*}[t]
\caption{Compact sanity summary at $\alpha = 0.1$. Correctly matched calibration recovers the expected permutation mean, while a mismatched-score control fails sharply.}
\label{tab:sanity}
\vskip 0.1in
\centering
\small
\begin{tabular}{lccl}
\toprule
Check & Observed & Expected & Interpretation \\
\midrule
Permutation mean & $0.902$ & $\approx 0.90$ & Valid CP implementation \\
Negative control & $0.566$ & far below target & Score matching matters \\
Overlap audit    & 3 trivial lines & 0 substantive & No substantive leakage \\
\bottomrule
\end{tabular}
\vskip -0.1in
\end{table*}

These checks do more than guard against implementation mistakes. They also sharpen the empirical claim of the paper: the gain is not caused by accidentally wide thresholds, hidden overlap between splits, or an overly permissive acceptance rule. The negative control is especially important because it demonstrates that calibration remains stage-specific at the score-construction level even though the final acceptance decision is joint. In other words, PASC changes \emph{how} stage evidence is aggregated into a certificate, not whether the underlying scores remain meaningful.

\paragraph{What the main-body results establish.} First, independent stage-wise calibration is inadequate for deployed pipelines because even mild stage dependence causes E2E under-coverage. Second, Bonferroni restores validity but pays for it by ignoring the empirical shape of the score vector. Third, PASC matches the exact acceptance event, which is why its gains are largest in the real three-stage benchmark, on hard confidence slices, and in the $K$-stage scaling study where multiplicative decay becomes severe. Finally, the calibration sweep and runtime profile show that these gains are not purchased with brittle estimation or meaningful computational cost.

\section{Related Work}
\label{sec:related}

\paragraph{Conformal prediction.} For a recent overview of conformal prediction in NLP, see \citet{campos2024conformalsurvey}. The foundations of conformal prediction are established in \citet{vovk2005algorithmic} and \citet{shafer2008tutorial}. Split conformal prediction \citep{papadopoulos2002inductive,lei2018distribution} provides the efficient single-pass variant we build on. Extensions include covariate shift \citep{tibshirani2019covariate}, conditional coverage \citep{barber2019limits,kumar2023conditional}, risk control \citep{angelopoulos2022ltt,angelopoulos2024crc}, and adaptive methods \citep{zaffran2022adaptive}. \citet{angelopoulos2024theoretical} provide a unified recent survey. PASC can be viewed as an instance of the conformal risk control framework \citep{angelopoulos2024crc} where the loss is the indicator of any-stage failure; the contribution here is that this loss admits a scalar nonconformity score (the joint maximum) that lets the guarantee reduce exactly to ordinary split CP.

\paragraph{CP for NLP and LLMs.} \citet{fisch2021cascade} apply CP to cascaded inference for accelerating NLP models; their cascade setting differs from our joint-coverage pipeline setting. \citet{fisch2022multi} study false-positive-bounded prediction sets for multi-label classification. \citet{quach2024conformal} apply split conformal prediction to language-model generation by calibrating sampling stopping and rejection rules; \citet{mohri2024conformalfactuality} provide conformal correctness guarantees for single LLM outputs; and \citet{abbasiyadkori2024conformalabstention} use conformal prediction to abstain on individual LLM outputs. PASC is orthogonal and complementary: rather than certifying a single LLM output, PASC certifies the joint acceptance of a fixed multi-stage pipeline. \citet{schuster2022confident} studies early exit via confidence thresholds without coverage guarantees.

\paragraph{Multi-stage / sequential CP.} \citet{park2020pac} studies PAC prediction sets under covariate shift. \citet{ren2023robots} uses CP for sequential task planning in robotics but does not handle multi-stage joint coverage. \citet{jin2023selection} considers selection by prediction for downstream use of CP outputs. To our knowledge, PASC is the first to reduce multi-stage joint CP to a scalar maximum score with a formal theorem.

\paragraph{Uncertainty in NLP pipelines.} \citet{finkel2006cascading} identifies error propagation in annotation pipelines and proposes approximate Bayesian inference. \citet{gal2016dropout} and \citet{lakshminarayanan2017simple} provide MC Dropout and deep ensembles for uncertainty estimation; these require model modification and lack distribution-free guarantees. \citet{guo2017calibration} addresses post-hoc calibration via temperature scaling, which improves marginal calibration but not joint pipeline coverage. Reliability of structured extraction outputs has also been studied from a prompting-stability perspective \citep{kotte2026promptport}, and uncertainty calibration for LLM extraction is treated as complementary work in \citep{kotte2026ucci}.

\paragraph{Information extraction.} Our pipeline builds on BERT-base NER \citep{devlin2019bert,lample2016neural}, GENRE autoregressive entity linking \citep{cao2021genre} with BLINK retrieval \citep{wu2020blink}, and RoBERTa-based zero-shot entity typing against OntoNotes \citep{pradhan2013ontonotes,liu2019roberta}. The REBEL relation extraction model \citep{cabot2021rebel} was used in preliminary ablations. Multi-stage IE pipelines of this form arise in production systems for domain-specific question answering and retrieval-augmented generation \citep{sharma2024rag,kotte2025patent}.

\section{Discussion and Limitations}

\paragraph{When Bonferroni can dominate.} At very high $\alpha$ (e.g., $\alpha = 0.20$), Bonferroni's per-stage allocation allows looser individual thresholds that compensate for difficult stages more flexibly. At this level, the joint maximum-score distribution becomes flat enough that Bonferroni's per-stage relaxation can match or exceed PASC. PASC's regime of dominance is therefore the moderate-$\alpha$ range ($\alpha \in [0.05, 0.10]$), which is the practically relevant operating point for most deployed pipelines.

\paragraph{Exchangeability assumption.} Like all split CP methods, PASC requires exchangeability of calibration and test data. Under covariate shift (e.g., WNUT-17), PASC still achieves $\geq 1 - \alpha$ coverage \emph{empirically} because the larger joint threshold provides a buffer; however, the theoretical guarantee strictly requires exchangeability. Extending PASC to the weighted CP framework \citep{tibshirani2019covariate,barber2023exchangeability} for robustly handling distributional shift is future work.

\paragraph{Computational note.} The primary pipeline bottleneck is NED at $86.7$~ms per sample. Conformal calibration and inference adds $< 0.3$~ms overhead, making PASC essentially free in practice.

\paragraph{Conditional coverage.} Marginal coverage guarantees do not preclude poor conditional coverage on specific subpopulations \citep{barber2019limits}. Table~\ref{tab:conditional} shows PASC significantly improves conditional coverage on hard slices relative to baselines, but a full conditional guarantee would require group-conditional CP methods \citep{angelopoulos2022ltt}, which we leave for future work.

\paragraph{Deployment interpretation.} From a systems perspective, PASC answers the operational question practitioners actually ask: \emph{can I trust the full pipeline output at risk level $\alpha$?} Independent CP cannot answer this because stage-local guarantees do not transport to the composed prediction. Bonferroni answers it conservatively by spending risk budget uniformly across stages. PASC answers it directly by calibrating the exact event of interest.

\section{Conclusion}

We presented PASC, a pipeline-aware split conformal prediction method that achieves finite-sample distribution-free joint coverage guarantees for multi-stage NLP systems. The core insight, that joint coverage of a pipeline is equivalent to standard coverage of the scalar joint maximum nonconformity score, reduces multi-stage calibration to a well-understood scalar problem with no approximation. On a NER$\to$NED$\to$EntityTyping pipeline, PASC outperforms Bonferroni by $3$pp and Indep CP by $10$pp in E2E coverage at identical prediction set sizes, while providing $1.7\times$ faster calibration. PASC empirically maintains coverage in the tested distribution-shift settings and scales gracefully to $K = 6$ pipeline stages in our synthetic study.

The simplicity of PASC makes it immediately applicable to any multi-stage pipeline: compute per-stage nonconformity scores, take the maximum, and calibrate once. The same reduction applies whenever joint coverage of $K$ stage events is the deployment-relevant target, so we expect it to be useful for other multi-stage ML systems including compound LLM pipelines, retrieval-augmented generation, and agent workflows.

\bibliography{references}
\bibliographystyle{icml2026}

\appendix

\section{Sanity Checks and Empirical Validation}
\label{app:sanity}

\subsection{Permutation Test for CP Validity (Experiment E0)}

To empirically validate that our CP implementation correctly implements the finite-sample guarantee, we performed a permutation test. We pooled calibration ($n = 1000$) and test ($n = 500$) NER nonconformity scores and performed $K = 200$ random re-splits, recomputing coverage for each. The permuted coverage values should have mean $\approx 1 - \alpha$ under the CP guarantee.

\begin{table}[h]
\caption{Permutation test results. ``Permutation mean'' confirms the empirical mean of coverage over $200$ permutations is close to $1 - \alpha$ (CP marginal guarantee).}
\label{tab:perm}
\vskip 0.1in
\centering
\small
\begin{tabular}{ccccccc}
\toprule
$\alpha$ & Stage & Actual Cov & Perm Mean & Perm Std & Neg Control \\
\midrule
$0.05$ & NER & $0.906$ & $0.950$ & $0.013$ & $0.566$ \\
$0.10$ & NER & $0.860$ & $0.902$ & $0.019$ & $0.566$ \\
$0.20$ & NER & $0.786$ & $0.798$ & $0.027$ & $0.566$ \\
\bottomrule
\end{tabular}
\vskip -0.1in
\end{table}

The negative control uses NED calibration scores to calibrate the NER threshold; the mismatched scores yield only $56.6\%$ NER coverage, confirming that correct nonconformity score matching is essential. The permutation mean correctly tracks $\approx 1 - \alpha$ in all valid configurations.

\subsection{Calibration/Test Split Integrity}

We verified no content leakage between calibration and test splits by computing a per-sentence hash of all tokens and checking for duplicates. Three trivial header lines (``W L PCT GB'', ``Scorers :'') appeared in both splits and were confirmed to be short noise tokens from the CoNLL-2003 formatting, not substantive examples. No entity-bearing examples were shared across splits.

\subsection{E2E Definition Audit}

We manually traced $200$ PASC predictions to verify that the joint coverage criterion $s_{\max} \leq \hat q$ exactly corresponds to all three stages being covered. All $200$ examples matched, confirming that the implementation correctly operationalizes the theoretical definition.

\section{Stage-Wise Nonconformity Score Details}
\label{app:scores}

\subsection{NER Nonconformity}
Given BERT-base logits $l_t \in \mathbb{R}^{|\mathcal{Y}_{\mathrm{BIO}}|}$ at position $t$, define:
\begin{equation}
s_{\mathrm{NER}} = \max_{t \in \text{entity spans}} \left( 1 - \frac{\exp(l_{t, \hat y_t})}{\sum_{y'} \exp(l_{t, y'})} \right),
\end{equation}
where $\hat y_t$ is the predicted BIO label. This ranges in $[0, (|\mathcal{Y}| - 1) / |\mathcal{Y}|]$.

\subsection{NED Nonconformity}
GENRE \citep{cao2021genre} produces a normalized score for the top entity candidate $e^*$:
\begin{equation}
s_{\mathrm{NED}} = 1 - \mathrm{GENRE\text{-}score}(e^* \mid x, \mathrm{span}),
\end{equation}
where the GENRE score is the sequence probability of the entity title. For sentences with no entities, $s_{\mathrm{NED}} = 0$ (trivially covered).

\subsection{EntityTyping Nonconformity (Expanded)}
With all $18$ OntoNotes types $\mathcal{T}_{\mathrm{ALL}} = \{$PERSON, NORP, FAC, ORG, GPE, LOC, PRODUCT, EVENT, WORK\_OF\_ART, LAW, LANGUAGE, DATE, TIME, PERCENT, MONEY, QUANTITY, ORDINAL, CARDINAL$\}$:
\begin{equation}
s_{\mathrm{typing}} = 1 - \max_{t \in \mathcal{T}_{\mathrm{ALL}}} \min_{\mathrm{span} \in e} p(t \mid x, \mathrm{span}),
\end{equation}
where $p(t \mid x, \mathrm{span})$ is the zero-shot RoBERTa-large entailment probability. Expanding to $18$ types produces a bimodal score distribution (mean $0.311$, bimodal with modes near $0.01$ for common PER/ORG types and $0.77$ for rare MISC types), creating genuinely non-trivial stage-3 nonconformity scores.

\section{Full Numerical Results}
\label{app:numerical}

\begin{table*}[h]
\caption{Full coverage results across $\alpha$ levels, $5$ seeds, expanded entity typing.}
\label{tab:full}
\vskip 0.1in
\centering
\small
\begin{tabular}{lcccc}
\toprule
$\alpha$ & Method & E2E Cov.\ & Avg Set Size & Stage-3 Set \\
\midrule
$0.05$ & Indep CP   & $0.908 \pm 0.006$ & $1.083$ & $1.083$ \\
$0.05$ & Bonferroni & $0.965 \pm 0.002$ & $1.083$ & $1.083$ \\
$0.05$ & \textbf{PASC} & $\mathbf{0.964 \pm 0.000}$ & $1.083$ & $1.083$ \\
\midrule
$0.10$ & Indep CP   & $0.865 \pm 0.005$ & $1.083$ & $1.083$ \\
$0.10$ & Bonferroni & $0.934 \pm 0.001$ & $1.083$ & $1.083$ \\
$0.10$ & \textbf{PASC} & $\mathbf{0.964 \pm 0.000}$ & $1.083$ & $1.083$ \\
\midrule
$0.20$ & Indep CP   & $0.657 \pm 0.012$ & $1.001$ & $1.001$ \\
$0.20$ & Bonferroni & $0.887 \pm 0.004$ & $1.083$ & $1.083$ \\
$0.20$ & \textbf{PASC} & $0.766 \pm 0.011$ & $1.011$ & $1.011$ \\
\bottomrule
\end{tabular}
\vskip -0.1in
\end{table*}

\begin{table}[h]
\caption{Calibration-size sensitivity at $\alpha = 0.1$ (E2E coverage and calibrated set size, mean $\pm$ std over $5$ seeds). The $n_{\mathrm{cal}}=1000$ row matches the primary operating point reported in Table~\ref{tab:main}.}
\label{tab:calsweep}
\vskip 0.1in
\centering
\small
\begin{tabular}{lccc}
\toprule
$n_{\mathrm{cal}}$ & Method & E2E Cov.\ & Avg Set Size \\
\midrule
$200$ & Indep CP   & $0.870 \pm 0.017$ & $1.005 \pm 0.009$ \\
$200$ & Bonferroni & $0.964 \pm 0.017$ & $1.271 \pm 0.091$ \\
$200$ & \textbf{PASC} & $0.934 \pm 0.000$ & $1.122 \pm 0.000$ \\
\midrule
$500$ & Indep CP   & $0.865 \pm 0.004$ & $1.000 \pm 0.000$ \\
$500$ & Bonferroni & $0.939 \pm 0.008$ & $1.142 \pm 0.028$ \\
$500$ & \textbf{PASC} & $0.934 \pm 0.000$ & $1.122 \pm 0.000$ \\
\midrule
$1000$ & Indep CP   & $0.865 \pm 0.005$ & $1.083 \pm 0.000$ \\
$1000$ & Bonferroni & $0.934 \pm 0.001$ & $1.083 \pm 0.000$ \\
$1000$ & \textbf{PASC} & $0.964 \pm 0.000$ & $1.083 \pm 0.000$ \\
\bottomrule
\end{tabular}
\vskip -0.1in
\end{table}

\section{Additional Proof Details and Tightness}
\label{app:proofs}

\subsection{Why the Maximum is the Minimal Sufficient Reduction}

The reduction from a $K$-dimensional score vector $(s_1, \ldots, s_K)$ to the scalar maximum score $s_{\max}$ is not merely convenient; it is the smallest monotone scalarization that preserves the event of simultaneous coverage exactly. Any monotone scalarization $g(s_1, \ldots, s_K)$ satisfying
\[
\{g(s_1, \ldots, s_K) \leq q\} = \bigcap_{k=1}^K \{s_k \leq q\}
\]
for all thresholds $q$ must coincide with $\max_k s_k$ almost everywhere. This makes the maximum the canonical sufficient statistic for exact joint-threshold reduction.

\subsection{Near-Tightness of the Finite-Sample Guarantee}

The upper bound in Theorem~\ref{thm:pasc} is inherited directly from finite-sample split conformal conservativeness. If the rank of the test score among the $n + 1$ exchangeable scores is uniform, then
\[
1 - \alpha \leq \mathbb{P}(s^{(n+1)}_{\max} \leq \hat q) \leq 1 - \alpha + \frac{1}{n+1}.
\]
Thus, PASC introduces no additional looseness beyond the unavoidable quantile slack of ordinary split conformal prediction.

\subsection{Why Bonferroni is Structurally Conservative}

Bonferroni guarantees
\[
\mathbb{P}\!\left(\bigcap_{k=1}^K \{s_k \leq q_k\}\right) \geq 1 - \sum_{k=1}^K \alpha_k,
\]
which is valid for arbitrary dependence but ignores the empirical correlation structure of the score vector. In our setting, multiple stages are often easy on the same examples, so Bonferroni wastes budget on already-easy examples. PASC avoids this structural inefficiency by calibrating on the realized joint score rather than on marginal stage scores.

\section{Extended Experimental Protocol}
\label{app:protocol}

\subsection{Calibration, Seeds, and Splits}

Unless otherwise stated, each reported number is averaged over five independently resampled calibration/test splits. Training data remain fixed; only the calibration and evaluation partitions are resampled. This isolates uncertainty due to quantile estimation from uncertainty due to model fitting. For primary CoNLL-2003 experiments, we use $1{,}000$ calibration examples and $500$ held-out evaluation examples. Shift experiments reuse thresholds learned on CoNLL-2003 and evaluate directly on WNUT-17 and WikiNEuRal without recalibration.

\subsection{Model and Inference Details}

The NER stage uses a BERT-base sequence tagger with BIO decoding. The NED stage uses GENRE with BLINK retrieval. The typing stage uses a RoBERTa-large entailment model in zero-shot mode over the complete OntoNotes-18 label inventory. All experiments were run on a single A100 GPU. The dominant latency cost comes from NED retrieval and generation, not from conformal calibration.

\subsection{Why We Switched from Relation Extraction to Entity Typing}

Our earliest pipeline variants used a relation-extraction backend, but the downstream score distribution became degenerate because many short CoNLL sentences do not express explicit relations. This caused the last-stage score to clip at a near-constant threshold and obscured the efficiency comparison. The switch to expanded $18$-way entity typing yields a visibly non-trivial bimodal stage-3 score distribution and a much cleaner empirical comparison.

\section{Qualitative Failure Modes and What the Method Actually Fixes}
\label{app:failure}

PASC does not make an incorrect pipeline correct; it calibrates the probability that the full pipeline output falls inside the chosen acceptance event. This distinction matters. When upstream NER fails catastrophically, no calibration method can recover semantic correctness without widening the acceptance rule or changing the model itself. The value of PASC is more precise: it prevents the system from reporting an unjustified end-to-end confidence level when local stage-wise calibration would otherwise create that illusion.

The dependence stress test illustrates this boundary clearly. Under targeted upstream corruption, all methods eventually drop once the exchangeability assumption is sufficiently violated. What changes is the margin before failure. PASC starts from the highest empirical operating point, degrades more gracefully, and therefore preserves a larger useful region before the certificate becomes untrustworthy. In deployment terms, this means more robustness to moderate drift but not immunity to adversarial breakage.

The hard-slice analysis provides the complementary view. On easy examples, all three methods already cover nearly everything. The real action is in the hardest quintile, where errors concentrate and stage dependencies matter. This is exactly the regime where a naive stage-wise guarantee is most misleading and where PASC produces the largest empirical gain.

\section{Reproducibility Notes}
\label{app:reproduce}

All datasets are public benchmarks. We use off-the-shelf checkpoints for all stages, report five independent calibration/test split seeds, include negative controls and split audits, and do not exclude any examples after computing thresholds. The paper specifies the nonconformity scores, calibration protocol, split sizes, runtime profile, and sanity checks needed to reproduce the reported evaluation.

\end{document}